%File: formatting-instruction.tex
\documentclass[letterpaper]{article}
\usepackage{aaai}
\usepackage{times}
\usepackage{helvet}
\usepackage{courier}
\usepackage{amsthm}
\usepackage{algorithm}
\usepackage{algorithmic}
\usepackage{multirow}
\usepackage{graphics}
\usepackage{amsmath}
\usepackage{subfigure}

\newtheorem{definition}{Definition}
\newtheorem{lemma}{Lemma}

\newtheorem{corollary}{Corollary}
\newtheorem{theorem}{Theorem}

\frenchspacing
\setlength{\pdfpagewidth}{8.5in}
\setlength{\pdfpageheight}{11in}
\pdfinfo{
/Title (Insert Your Title Here)
/Author (Put All Your Authors Here, Separated by Commas)}
\setcounter{secnumdepth}{0}

 \begin{document}
% The file aaai.sty is the style file for AAAI Press 
% proceedings, working notes, and technical reports.
%
\title{A Formal Analysis of Required Cooperation in Multi-agent Planning}
%\author{Also under review for a conference publication.\\
%}
\author{Yu Zhang and Subbarao Kambhampati\\
School of Computing and Informatics\\
Arizona State University\\
Tempe, Arizona 85281 USA\\
\{yzhan442,rao\}@asu.edu\\
}
\maketitle

\begin{abstract}
\begin{quote}
Research on multi-agent planning has been popular in recent years. While previous research has been motivated by the understanding that, through cooperation, multi-agent systems can achieve tasks that are unachievable by single-agent systems, there are no formal characterizations of situations where cooperation is required to achieve a goal, thus warranting the application of multi-agent systems. In this paper, we provide such a formal discussion from the planning aspect. We first show that determining whether there is required cooperation (RC) is intractable is general. Then, by dividing the problems that require cooperation (referred to as RC problems) into two classes -- problems with heterogeneous and homogeneous agents, we aim to identify all the conditions that can cause RC in these two classes. We establish that when none of these identified conditions hold, the problem is single-agent solvable. Furthermore, with a few assumptions, we provide an upper bound on the minimum number of agents required for RC problems with homogeneous agents. This study not only provides new insights into multi-agent planning, but also has many applications. For example, in human-robot teaming, when a robot cannot achieve a task, it may be due to RC. In such cases, the human teammate should be informed and, consequently, coordinate with other available robots for a solution. 
\end{quote}
\end{abstract}

\section{Introduction}
\label{sec:intro}

A multi-agent planning (MAP) problem differs from a single agent planning (SAP) problem in that more than one agent is used in planning. While a (non-temporal) MAP problem can be compiled into a SAP problem by considering agents as resources, the search space grows exponentially with the number of such resources. Given that a SAP problem with a single such resource is in general PSPACE-complete \cite{Bylander1991}, running a single planner to solve MAP is inefficient. Hence, previous research has generally agreed that agents should be considered as separate entities for planning, and thus has been mainly concentrated on how to explore the interactions between the agents (i.e., loosely-coupled vs. tightly-coupled) to reduce the search space, and how to perform the search more efficiently in a distributed fashion.

However, there has been little discussion on whether multiple agents are required for a planning problem in the first place. If a single agent is sufficient, solving the problem with multiple agents becomes an efficiency matter, e.g., shortening the makespan of the plan. Problems of this nature can be solved in two separate steps: planning with a single agent and optimizing with multiple agents. In such a way, the difficulty of finding a solution may potentially be reduced.

In this paper, we aim to answer the following questions: $1)$ Given a problem with a set of agents, what are the conditions that make cooperation between multiple agents {\em required} to solve the problem; $2)$ How to determine the minimum number of agents required for the problem. We show that providing the exact answers is intractable. Instead, we attempt to provide approximate answers. To facilitate our analysis, we first divide MAP problems into two classes -- MAP problems with heterogeneous agents, and MAP problems with homogeneous agents. Consequently, the MAP problems that {\em require cooperation} (referred to as RC problems) are also divided into two classes -- type-$1$ RC (RC with heterogeneous agents) and type-$2$ RC (RC with homogeneous agents) problems. Figure \ref{fig:map-div} shows these divisions.

For the two classes of RC problems, we aim to identify all the conditions that can cause RC. Figure \ref{fig:rc-div} presents these conditions and their relationships to the two classes of RC problems. We establish that at least one of these conditions must be present in order to have RC. Furthermore, we show that most of the problems in common planning domains belong to type-$1$ RC, which is identified by three conditions in the problem formulation that define the heterogeneity of agents; most of the problems in type-$1$ RC can be solved by a {\em super agent}. For type-$2$ RC, we show that RC is only caused when the state space is not {\em traversable} or when there are {\em causal loops} in the causal graph. We provide upper bounds for the answer of the second question for type-$2$ RC problems, based on different relaxations of the conditions that cause RC, which are associated with, for example, how certain causal loops can be broken in the causal graph.

\begin{figure}
\centering
{
    \includegraphics{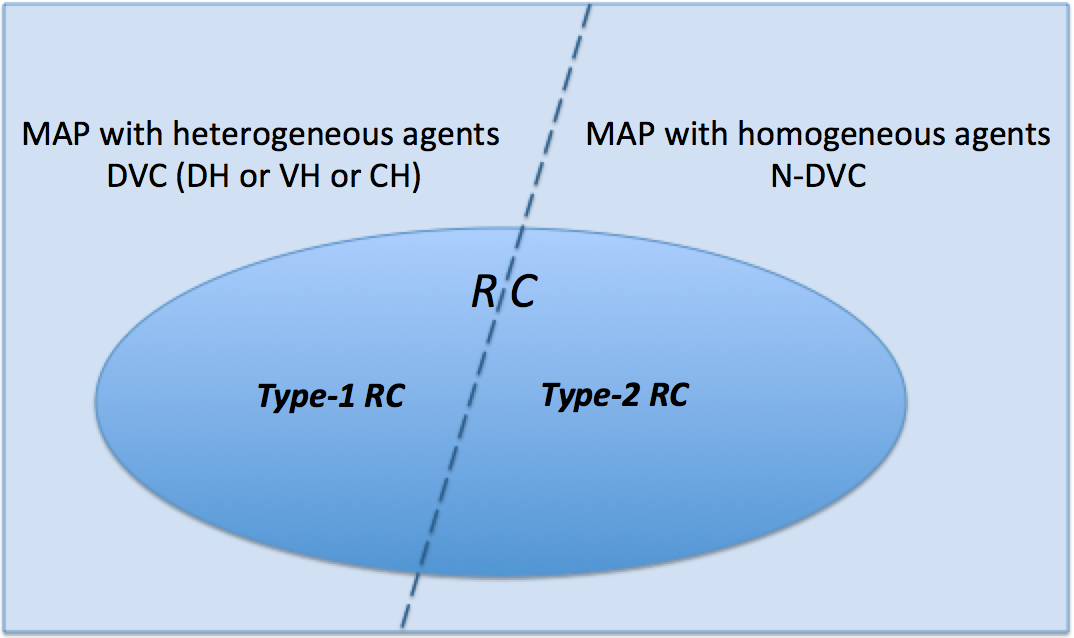}
}
\caption{Division of MAP problems into MAP with heterogeneous and homogeneous agents. Consequently, RC problems are also divided into two classes: type-$1$ RC involves RC problems with heterogeneous agents and type-$2$ RC involves RC problems with homogeneous agents.}
\label{fig:map-div}
\end{figure}

\begin{figure}
\centering
{
    \includegraphics{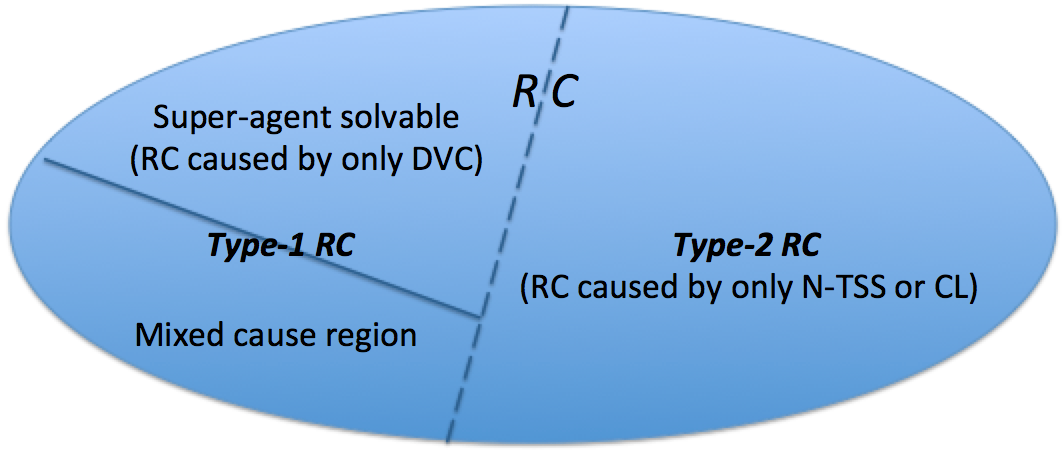}
}
\caption{Causes of required cooperation in RC problems.}
\label{fig:rc-div}
\end{figure}

The answers to these questions not only enrich our fundamental understanding of MAP, but also have many applications. For example, in a human robot teaming scenario, a human may be remotely working with multiple robots. When a robot is assigned a task that it cannot achieve, it is useful to determine whether the failure is due to the fact that the task is simply unachievable or the task requires more than one robot. In the latter case, it is useful then to determine how many extra robots must be sent to help. The answers can also be applied to multi-robot systems, and are useful in general to any multi-agent systems in which the team compositions can dynamically change (e.g., when the team must be divided to solve different problems).

The rest of the paper is organized as follows. After a review of the related literature, we start the discussion of required cooperation for MAP, in which we answer the above questions in an orderly fashion. We conclude afterward.

\section{Related Work}
\label{sec:related}

One of the earlier works on MAP is the PGP framework by \cite{Durfee91,Decker92}. Recently, the MAP problem has started to receive an increasing amount of attention. Most of these recent research works consider agents separately for planning, and have been concentrated on how to explore the structure of agent interactions to reduce the search space, as well as solving the problem in a distributed fashion. \cite{Nissim2010} provide a search method by compiling MAP into a constraint satisfaction problem (CSP), and then using a distributed CSP framework to solve it. The MAP formulation is based on an extension of the STRIPS language called MA-STRIPS \cite{brafman2008}. In MA-STRIPS, actions are categorized into public and private actions. Public actions can influence other agents while private actions cannot. In this way, it is shown by \cite{brafman2008} that the search complexity of MAP is exponential in the tree-width of the agent interaction graph. Due to the poor performance of DisCSP based approaches, \cite{Nissim2012} apply the $A^*$ search algorithm in a distributed manner, which represents one of the state-of-art MAP solvers. \cite{torreno2012} propose a POP-based distributed planning framework for MAP, which uses a cooperative refinement planning technique that can handle planning with any level of coupling between the agents. Each agent at any step proposes a refinement step to improve the current group plan. Their approach does not assume complete information. A similar paradigm is taken by \cite{kvarnstrom11}. An iterative best-response planning and plan improvement technique using standard SAP algorithms is provided by \cite{jonsson2011}, which considers the previous singe agent plans as constraints to be satisfied while the following agents perform planning.

Given a problem, all of these MAP approaches solve it using the given set of agents, without first asking whether multiple agents are really required, let alone what is the minimum number of agents required. Answers to these questions not only separate MAP from SAP in a fundamental way, but also have real world applications when the team compositions can dynamically change. In this paper, we analyze these questions using the SAS$^+$ formalism \cite{backstrom96} with {\em causal graph} \cite{Knoblock94,helmert06}, which is often discussed in the context of factored planning \cite{bacchus93,amir03,brafman06,brafman2013}. The causal graph captures the interaction between different variables; intuitively, it can also capture the interactions between agents since agents affect each other through these variables. In fact, \cite{brafman2013} mention the causal graph's relation to the agent interaction graph when each variable is associated with a single agent.

%%%%%%%%%%%%%%%%%%%%%%%%%%%%%%%%%%%%%%
\section{Multi-agent Planning (MAP)}
\label{sec:ma-planning}

In this paper, we start the analysis of RC in the simplest scenarios -- with instantaneous actions and sequential execution. The possibility of RC can only increase when we extend the model to the temporal domain, in which concurrent or synchronous actions must be considered. We develop our analysis of required cooperation for MAP based on the SAS$^+$ formalism \cite{backstrom96}.

%--------------------------------------------------------------------------------------------------
\subsection{Background}

\begin{definition}
A SAS$^+$ problem is given by a tuple $P = \langle V, A, I, G \rangle$, where:
	\begin{itemize}
		\item $V = \{v_1, ..., v_n\}$ is a set of state variables. Each variable $v_i \in V$ is associated with its domain $D(v_i)$, which is used to define an extended domain $D(v_i)^+ = D(v_i) \cup u$, where $u$ denotes the undefined value. The state space is defined as $S^+_V = D(v_1)^+ \times ... \times D(v_n)^+$; $s[v_i]$ denotes the value of the variable $v_i$ in a state $s \in S^+_V$.
		\item $A = \{a_1, ..., a_m\}$ is a finite set of actions. Each action $a_j$ is a tuple $\langle pre(a_j), post(a_j), prv(a_j) \rangle$, where $pre(a_j), post(a_j), prv(a_j) \subseteq S^+_V$ are the preconditions, postconditions and prevail conditions of $a_j$, respectively. We also use $pre(a_j)[v_i], post(a_j)[v_i], prv(a_j)[v_i]$ to denote the corresponding values of $v_i$. 
		\item $I$ and $G$ denote the initial and goal state, respectively.
	\end{itemize}
\label{def:sas}
\end{definition}

A plan in SAS$^+$ is often defined to be a total-order plan:

\begin{definition}
A plan $\pi$ in SAS$^+$ is a sequence of actions $\pi = \langle a_1,..., a_l \rangle$.
\label{def:sas-plan}
\end{definition}

Given two states $s_1$, $s_2 \in S^+_V$, $(s_1 \oplus s_2)$ denotes that $s_1$ is updated by $s_2$, and is subject to the following for all $v_i \in V$:

\begin{equation}
(s_1 \oplus s_2)[v_i] = \left\{ 
  \begin{array}{l l}
    s_2[v_i] &  \text{if $s_2[v_i] \neq u$,}\\
    s_1[v_i] &  \text{otherwise.}
  \end{array} \right.
\end{equation}

Given a variable with two values $x, y$ in which one of them is $u$, $x \sqcup y$ is defined to be the other value. $\sqcup$ can be extended to two states $s_1$ and $s_2$, such that $s_1 \sqcup s_2[vi] = s_1[v_i] \sqcup s_2[v_i]$ for all $v_i \in V$. $s_1 \sqsubseteq s_2$ if and only if $\forall v_i \in V, s_1[v_i] = u$ or $s_1[v_i] = s_2[v_i]$. The state resulting from executing a plan $\pi$ can then be defined recursively using a $re$ operator as follows:

\begin{equation}
re(s, \langle \pi;o \rangle) = \left\{ 
  \begin{array}{l l}
    re(s, \langle \pi \rangle) \oplus post(o) \\
    \text{if $pre(o) \sqcup prv(o) \sqsubseteq re(s, \langle \pi \rangle)$,}\\
    s  \quad \text{otherwise.}
  \end{array} \right.
\end{equation}
in which $re(s, \langle \rangle) = s$, $o$ is an action, and $;$ is the concatenation operator.

%--------------------------------------------------------------------------------------------------
\subsection{Extension to MAP}

To extend the previous formalism to MAP without losing generality, we minimally modify the definitions.

\begin{definition}
A SAS$^+$ MAP problem is given by a tuple $\Pi = \langle V, \Phi, I, G \rangle$, where:
	\begin{itemize}
		\item $\Phi = \{\phi_g\}$ is the set of agents; each agent $\phi_g$ is associated with a set of actions $A(\phi_g)$. 
	\end{itemize}
\end{definition}

\begin{definition}
A plan $\pi_{MAP}$ in MAP is a sequence of agent-action pairs $\pi_{MAP} = \langle (a_1, \phi(a_1)),..., (a_L, \phi(a_L)) \rangle$, in which $\phi(a)$ returns the agent for the action $a$ and $L$ is the length of the plan.
\end{definition}

We do not need to consider concurrency or synchronization given that actions are assumed to be instantaneous.

%--------------------------------------------------------------------------------------------------
\section{Required Cooperation for MAP}

Next, we formally define the notion of {\em required cooperation} and other useful terms that are used in the following analyses. We assume throughout the paper that more than one agent is considered (i.e., $|\Phi| > 1$).

%--------------------------------------------------------------------------------------------------
\subsection{Required Cooperation}

\begin{definition}[$k$-agent Solvable]
Given a MAP problem $P = \langle V, \Phi, I, G \rangle$ ($|\Phi| \geq k$), the problem is $k$-agent solvable if $\exists \Phi_k \subseteq \Phi$ ($|\Phi_k| = k$), such that $\langle V, \Phi_k, I, G \rangle$ is solvable.
\label{def:k-agent}
\end{definition}

\begin{definition}[Required Cooperation (RC)]
Given a solvable MAP problem $P = \langle V, \Phi, I, G \rangle$, there is required cooperation if it is not $1$-agent solvable.
\label{def:rc}
\end{definition}

In other words, given a solvable MAP problem that satisfies RC, any plan must involve more than one agent.

\begin{lemma}
Given a solvable MAP problem $P = \langle V, \Phi, I, G \rangle$, determining whether it satisfies RC is PSPACE-complete.
\label{lem:rc}
\end{lemma}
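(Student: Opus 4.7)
My plan is to establish the two directions of PSPACE-completeness separately. For membership, I would observe that by Definition~\ref{def:rc} a solvable MAP problem has RC iff for every agent $\phi \in \Phi$ the single-agent restriction $\langle V, \{\phi\}, I, G\rangle$ is unsolvable. Each such restriction is a classical SAS$^+$ planning instance, and deciding its solvability is in PSPACE by the results of \cite{Bylander1991,backstrom96}. A top-level procedure loops through the polynomially many agents, invokes the PSPACE solvability test for each, and reports RC iff all tests fail; since PSPACE is closed under complement and under polynomially bounded conjunction, the whole procedure stays in PSPACE.

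For hardness I would reduce from SAS$^+$ plan existence, which is itself PSPACE-complete. Given a SAS$^+$ instance $P = \langle V, A, I, G\rangle$, I would construct a two-agent MAP instance $\Pi$ as follows. Introduce a fresh binary variable $f$ with $I[f]=0$. Agent $\phi_1$ owns every action of $A$ plus one extra ``give-up'' action whose only effect is $f := 1$. Agent $\phi_2$ owns one or more ``shortcut'' actions, each with prevail condition $f=1$, whose combined postconditions install the goal values of all variables in $G$. The goal of $\Pi$ remains $G$. Three observations then close the argument. First, $\Pi$ is always solvable, because $\phi_1$'s give-up enables $\phi_2$'s shortcut, so the lemma's solvability hypothesis is satisfied. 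Second, if $P$ is solvable, then $\phi_1$ alone solves $\Pi$ by executing a plan for $P$, so $\Pi$ has no RC. Third, if $P$ is unsolvable, then $\phi_1$ alone cannot reach $G$ (give-up does not help it) and $\phi_2$ alone cannot enable its own shortcut, so RC must hold. Hence $P$ is unsolvable iff $\Pi$ has RC, which yields PSPACE-hardness by closure of PSPACE under complement.

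The main obstacle is arranging the reduction so that $\Pi$ is \emph{guaranteed} to be solvable (so that the RC question is meaningful under the lemma's hypothesis) while the existence of a single-agent solution still tracks solvability of the embedded SAS$^+$ instance. The give-up/shortcut gadget is designed for exactly this tension: the two agents can always unlock each other's contribution toward $G$, yet neither can play both roles alone, so single-agent solvability collapses onto solvability of $P$. Once this gadget is in place, the polynomial size of the construction and the PSPACE bookkeeping on the upper bound are routine.
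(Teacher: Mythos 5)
Your proof is correct and follows essentially the same route as the paper: the membership argument checks unsolvability of each single-agent restriction, and the hardness argument reduces from (the complement of) single-agent plan existence via exactly the paper's gadget, in which one agent's trivially executable extra action enables a second agent's goal-achieving action so that the constructed instance is always solvable while single-agent solvability tracks solvability of the original instance. The only cosmetic difference is that you reduce from SAS$^+$ plan existence rather than PLANSAT; both are PSPACE-complete, so nothing changes.
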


\begin{proof}
First, it is not difficult to show that the RC decision problem belongs to PSPACE, since we only need to verify that $P = \langle V, \phi, I, G \rangle$ is unsolvable for all $\phi \in \Phi$, given that the initial problem is known to be solvable. Then, we complete the proof by reducing from the PLANSAT problem, which is PSPACE-complete in general \cite{Bylander1991}. Given a PLANSAT problem (with a single agent), the idea is that we can introduce a second agent with only one action. This action directly achieves the goal but requires an action (with all preconditions satisfied in the initial state) of the initial agent to provide a precondition that is not initially satisfied. We know that this constructed MAP problem is solvable. If the algorithm for the RC decision problem returns that cooperation is required for this MAP problem, we know that the original PLANSAT problem is unsolvable; otherwise, it is solvable. 
\end{proof}

\begin{definition}[Minimally $k$-agent Solvable]
Given a solvable MAP problem $P = \langle V, \Phi, I, G \rangle$ ($|\Phi| \geq k$), it is minimally $k$-agent solvable if it is $k$-agent solvable, and not $(k$$-$$1)$-agent solvable.
\label{def:m-k-agent}
\end{definition}

\begin{corollary}
Given a solvable MAP problem $P = \langle V, \Phi, I, G \rangle$, determining the minimally solvable $k$ ($k \leq |\Phi|$) is PSPACE-complete.
\label{cor:krc}
\end{corollary}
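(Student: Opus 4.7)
The plan is to prove both PSPACE membership and PSPACE-hardness, using Lemma~1 as the main leverage for hardness.

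For membership, I would describe a polynomial-space procedure that, for each candidate value $k' \in \{1, 2, \ldots, |\Phi|\}$ in increasing order, enumerates every subset $\Phi_{k'} \subseteq \Phi$ of size $k'$ and checks whether $\langle V, \Phi_{k'}, I, G \rangle$ is solvable by invoking a PLANSAT oracle. Each PLANSAT check is in PSPACE by \cite{Bylander1991}, the enumeration over subsets needs only polynomial space (store one subset at a time), and we return the first $k'$ for which some subset is solvable. Since PSPACE is closed under polynomially many sequential PSPACE subroutine calls (equivalently, the whole procedure can be cast as an NPSPACE algorithm and Savitch's theorem applies), the overall algorithm runs in polynomial space.

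For hardness, the cleanest route is a direct reduction from the RC decision problem of Lemma~1 (which is already established to be PSPACE-complete). Given a solvable MAP instance $P$, observe that $P$ satisfies RC if and only if the minimally solvable $k$ for $P$ is strictly greater than $1$. Hence any algorithm that computes (or decides a threshold on) the minimally solvable $k$ immediately decides RC by a single comparison with $1$. This reduction is trivially polynomial-time, so PSPACE-hardness of the RC decision problem carries over.

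I do not anticipate a real obstacle here; the only subtlety to get right is stating the decision version of ``determining the minimally solvable $k$'' precisely enough that both directions go through uniformly (typically phrased as: given $P$ and an integer $k$, decide whether $P$ is minimally $k$-agent solvable). With that phrasing, the reduction from RC maps to the instance ``is $P$ minimally $1$-agent solvable?''---a ``no'' answer, together with solvability of $P$, is exactly RC. I would close by noting that the same construction used in the proof of Lemma~1 (introducing a second agent whose single action depends on a precondition supplied by the original agent) also witnesses that RC-hard instances embed into the minimally-$k$ problem without blowing up the representation.
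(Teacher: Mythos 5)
Your proof is correct and matches the paper's (implicit) intent: the paper states this as an unproved corollary of Lemma~1, and your argument---PSPACE membership by iterating the per-subset solvability check in polynomial space, and hardness by observing that RC holds iff the minimal $k$ exceeds $1$---is exactly the standard way to make that step rigorous. Your remark about fixing a precise decision version (``is $P$ minimally $k$-agent solvable?'') is a worthwhile clarification that the paper itself glosses over.
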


Although directly querying for RC is intractable, we aim to identify all the conditions (which can be quickly checked) that can cause RC. We first define a few terms that are used in the following discussions.

We note that the reference of agent is explicit in the action (i.e., ground operator) parameters. Although actions are unique for each agent, two different agents may be capable of executing actions that are instantiated from the same operator, with all other parameters being identical. To identify such cases, we introduce the notion of {\em action signature}.

\begin{definition}[Action Signature (AS)]
An action signature is an action with the reference of the executing agent replaced by a global $EX$-$AG$ symbol. 
\label{def:sig}
\end{definition}

For example, an action signature in the IPC logistics domain is $drive(EX$-$AG, pgh$-$po, pgh$-$airport)$. $EX$-$AG$ is a global symbol to denote the executing agent, which is not used to distinguish between action signatures. We denote the set of action signatures for $\phi \in \Phi$ as $AS(\phi)$, which specifies the {\em capabilities} of $\phi$. Furthermore, we define the notion of {\em agent variable}.

\begin{definition}[Agent Variable (Agent Fluent)]
A variable (fluent) is an agent variable (fluent) if it is associated with the reference of an agent.
\label{def:ag-atom}
\end{definition}

Agent variables are used to specify agent state. For example, $location(truck$-$pgh)$ is an agent variable since it is associated with an agent $truck$-$pgh$. We use $V_{\phi} \subseteq V$ to denote the set of agent variables that are associated with $\phi$ (i.e., variables that are present in the initial state or actions of $\phi$).

Following this notation, we can rewrite a MAP problem as $P = \langle V_o \cup V_{\Phi}, \Phi, I_o \cup I_{\Phi}, G_o \cup G_{\Phi} \rangle$, in which $V_{\Phi} = \{V_{\phi}\}$, $I_{\Phi} = \{I_{\phi}\}$, $G_{\Phi} = \{G_{\phi}\}$, $I_{\phi} = I \cap V_{\phi}$ and $G_{\phi} = G \cap V_{\phi}$. $V_o$ denotes the set of non-agent variables; $I_o$ and $G_o$ are the set of non-agent variables in $I$ and $G$, respectively. In this paper, we assume that agents can only interact with each other through non-agent variables (i.e., $V_o$). In other words, agent variables contain one and only one reference of agent. As a result, we have $V_{\phi} \cap V_{\phi'} \equiv \emptyset$ ($\phi \neq \phi'$). It seems to be possible to compile away exceptions by breaking agent variables (with more than one reference of agent) into multiple variables and introducing non-agent variables to correlate them.

\begin{definition}[Variable (Fluent) Signature (VS)]
Given an agent variable (fluent), its variable (fluent) signature is the variable (fluent) with the reference of agent replaced by $EX$-$AG$.
\label{def:at-sig}
\end{definition}

For example, $location(truck$-$pgh)$ is an agent variable for $truck$-$pgh$ and its variable signature is $location(EX$-$AG)$. We denote the set of VSs for $V_\phi$ as $VS(\phi)$, and use $VS$ as an operator so that $VS(v)$ returns the VS of a variable $v$; this operator returns any non-agent variable unchanged.

%--------------------------------------------------------------------------------------------------
\section{Classes of RC}

In the following discussion, we assume that the specification of goal (i.e., $G$) in the MAP problems does not involve agent variables (i.e., $G \cap V_{\phi} = \emptyset$ or $G_{\phi} = \emptyset$), since we are mostly interested in how to reach the desired world state (i.e., specified in terms of $V_o$). As aforementioned, we divide RC problems into two classes as shown in Figure \ref{fig:map-div}. Type-$1$ RC involves problems with heterogeneous agents; type-$2$ RC involves problems with homogeneous agents. Next, we formally define each class and discuss the causes of RC. Throughout this paper, when we denote a condition as X, the negated condition is denoted as N-X.

%--------------------------------------------------------------------------------------------------
\subsection{Type-$1$ RC (RC with Heterogeneous Agents)}
\label{sec:type-1}

Given a MAP problem $P = \langle V, \Phi, I, G \rangle$, the heterogeneity of agents can be characterized by the following conditions:

\begin{itemize}
	\item {\em Domain Heterogeneity (DH)}: $\exists v \in V_{\phi}$ and $D(v) \setminus D(V') \neq \emptyset$, in which $V' = \{v' | v' \in V_{\phi'} (\phi' \neq \phi)$ and $VS(v) = VS(v')\}$.
	\item {\em Variable Heterogeneity (VH)}: $VS(\phi) \setminus VS(\Phi \setminus \phi) \neq \emptyset$.
	\item {\em Capability Heterogeneity (CH)}: $AS(\phi) \setminus AS(\Phi \setminus \phi) \neq \emptyset$.
\end{itemize}

\begin{definition}[Type-1 RC]
An RC problem belongs to type-$1$ RC if at least one of DH, VH and CH is satisfied for an agent.
\label{def:type1}
\end{definition}

The condition that requires at least one of DH, VH and CH to be satisfied is denoted as DVC in Figure \ref{fig:map-div}. It is worth noting that when considering certain objects (e.g., truck and plane in the logistics domain) as agents rather than as resources, most of the RC problems in the IPC domains belong to type-$1$ RC.

%--------------------------------------------------------------------------------------------------
\subsection{Causes of RC in Type-$1$}
\label{sec:type-1-cause}

The most obvious condition for RC in type-$1$ RC problems is due to the heterogeneity of agents. In the logistics domain, for example, if any truck agent can only stay in one city, the domains of the location variable for different truck agents are different (DH). When there are packages that must be transferred between different locations within cities, at least one truck agent for each city is required (hence RC). 
In the rover domain, a rover that is not equipped with a camera sensor would not be associated with the agent variable $equipped\_for\_imaging$. When we need both $equipped\_for\_imaging$ and $equipped\_for\_rock\_analysis$, and no rovers are equipped with the sensors for both (VH), we have RC. Note that VH does not specify any requirement on the variable value (i.e., the state); however, when the domain of a variable contains only a single value, e.g., $equipped\_for\_imaging$, we assume in this paper that this variable is always defined in a positive manner, e.g., expressing cans instead of cannots.  
In the logistics domain, given that the truck agent cannot fly (CH), when a package must be delivered from a city to a non-airport location of another city, at least a truck and a plane are required. Note that DH, VH and CH are closely correlated.

However, note that $1)$ the presence of DVC in a solvable MAP problem does not always cause RC, as shown in Figure \ref{fig:map-div}; $2)$ the presence of DVC in a type-$1$ RC problem is not always the cause of RC, as shown in Figure \ref{fig:rc-div}.

As an example for $1)$, when there is only one package to be delivered from one location to another within the same city, there is no need for a plane agent, even though we can create a non-RC MAP problem with a plane and a truck agent that satisfies CH (thus DVC).

As an example for $2)$, for navigating in a grid world, the traversability of the world for all mobile agents can be restricted based on edge connections, i.e., $connected(a, b)$, in which $a$ and $b$ are vertices in the grid.
Suppose that we have two packages to be delivered to locations $b$ and $c$, respectively, which are both initially at $a$. There are two truck agents at $a$ that can be used for delivery. However, the paths from $a$ to both $b$ and $c$ are one-way only (i.e., $connected(a, b) = true$ and $connected(b, a) = false$). Even if one of the truck agents uses gas and the other one uses diesel, thus satisfying DVC, it is clear that RC in this problem is not caused by the heterogeneity of agents.

Type-$1$ RC problems in which RC is caused by only DVC can be solved by a {\em super agent} (defined below), which is an agent that combines all the domain values, variable signatures and capabilities (i.e., action signatures) of the other agents. We refer to the subset of type-$1$ RC problems that can be solved by a super agent as {\it super-agent solvable}, as shown in Figure \ref{fig:rc-div}.

\begin{definition}[Super Agent]
A super agent is an agent $\phi^*$ that satisfies:
\begin{itemize}
	\item $\forall v \in V_{\Phi}$, $\exists v^* \in V_{\phi^*}$, $D(v^*) = D(V)$, in which $V = \{v | v \in V_{\Phi}$ and $VS(v^*) = VS(v)\}$.
	\item $VS(\phi^*) = VS(\Phi)$.
	\item $AS(\phi^*) = AS(\Phi)$.
\end{itemize}
\end{definition}

It is not difficult to see that most problems in the IPC domains are also super-agent solvable. For example, when we have a truck-plane agent in the logistics domain that can both fly (between airports of different cities) and drive (between locations in the same cities), or when we have a rover that is equipped with all sensors and can traverse all waypoints in the rover domain.

From Figure \ref{fig:rc-div}, one may have already noticed that the conditions that cause RC in type-$2$ problems may also cause RC in type-$1$ problems (i.e., indicated by the {\it mixed cause region} in Figure \ref{fig:rc-div}). For example, the aforementioned example for navigating in a grid world demonstrates that the initial states (specified in terms of the values for variables) of different agents may cause RC in type-$1$ problems. Note that the initial states of different agents cannot be combined as for domain values, variable signatures and capabilities in a super agent construction; however, the special cases when the domains of variables contain only a single value (when we discussed VH in {\textbf{Causes of RC in Type-$1$}}) can also be considered as cases when RC is caused by the initial state.

%--------------------------------------------------------------------------------------------------
\subsection{Type-$2$ RC (RC with Homogeneous Agents)}
\label{sec:type-2}

Type-$2$ RC involves homogeneous agents:

\begin{definition}[Type-$2$ RC]
An RC problem belongs to type-2 RC if it satisfies N-DVC (for all agents). 
\label{def:src}
\end{definition}

Definition \ref{def:src} states that an RC problem belongs to type-$2$ RC when all the agents are homogeneous.

%--------------------------------------------------------------------------------------------------
\subsection{Type-$2$ RC Caused by Traversability}
\label{sec:first-cause-2}

One condition that causes RC in type-$2$ RC problems is the {\it traversability} of the state space of variables, which is related to the initial states of the agents and the world, as we previously discussed. Since the traversability is associated with the evolution of variable values, we use causal graphs to perform the analysis.

\begin{definition}[Causal Graph]
Given a MAP problem $P = \langle V, \Phi, I, G \rangle$, the causal graph $G$ is a graph with directed and undirected edges over the nodes $V$. For two nodes $v$ and $v'$ ($v \neq v'$), a directed edge $v \rightarrow v'$ is introduced if there exists an action that updates $v'$ while having a prevail condition associated with $v$. An undirected edge $v - v'$ is introduced if there exists an action that updates both.
\end{definition}

A typical example of a causal graph for an individual agent is presented in Figure \ref{fig:no-loop}. For type-$2$ RC study, since the agents are homogeneous, the causal graphs for all agents are the same. Hence, we can use agent VSs to replace agent variables; we refer to this modified causal graph for a single agent in a type-$2$ RC problem as an {\em individual causal graph signature} (ICGS). Next, we define the notions of {\em closures} and {\em traversable state space}.

\begin{figure}
\centering
{
    \includegraphics{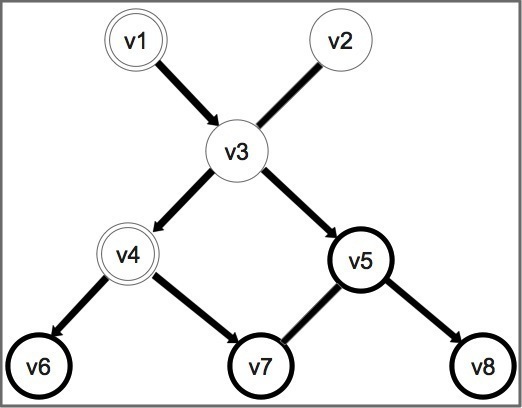}	
}
\caption{Example of a causal graph (ICGS). Variables in goal $G$ are shown as bold-circle nodes and agent VSs are shown as double-circle nodes.}
\label{fig:no-loop}
\end{figure}

\begin{definition}[Inner and Outer Closures (IC and OC)]
An inner closure (IC) in an ICGS is any set of variables for which no other variables are connected to them with undirected edges; an outer closure (OC) of an IC is the set of nodes that have directed edges going into nodes in the IC.  
\end{definition}

In Figure \ref{fig:no-loop}, $\{v_2, v_3\}$ and $\{v_4\}$ are examples of ICs. The OC of $\{v_2, v_3\}$ is $\{v_1\}$ and the OC of $\{v_4\}$ is $\{v_3\}$.

\begin{definition}[Traversable State Space (TSS)]
An IC has a traversable state space if and only if: given any two states of this IC, denoted by $s$ and $s'$, there exists a plan that connects them, assuming that the state of the OC of this IC can be changed freely within its state space. 
\end{definition}

In other words, an IC has a TSS if the traversal of its state space is only dependent on the variables in its OC; this also means that when the OC of an IC is empty, the state of the IC can change freely. Note that static variables in the OC of an IC can assume values that do not influence the traversability. For example, the variables that are used to specify the connectivity of vertices in a grid, e.g., $connected(a, b)$, can be assigned to be $true$ or $false$; although the variables that are assigned to be $true$ cannot change their values to be $false$, they do not influence the traversability of the grid world. In such cases, the associated ICs are still considered to have a TSS. An ICGS in which all ICs have TSSs is referred to as being traversable.

%--------------------------------------------------------------------------------------------------
\subsection{Type-$2$ RC Caused by Causal Loops}
\label{sec:second-cause-2}

However, even a solvable MAP problem that satisfies N-DVC for all agents while having a traversable ICGS can still satisfy RC. An example is presented below.

The goal of this problem is to steal a diamond from a room, in which the diamond is secured, and place it in another room. The diamond is protected by a stealth detection system. If the diamond is taken, the system locks the door of the room in which the diamond is kept, so that the insiders cannot exit. There is a switch to override the detection system but it is located outside of the room. This problem is modeled as above, in which the value is immediately specified after each variable. It is not difficult to see that the above problem cannot be solved with a single agent.

\makebox{} \par
\framebox
{
	\begin{minipage}{7.6cm}
    	{\bf Initial State}: 
	\begin{quote}
	$location(agent1)$ $room1$ \newline
	$location(agent2)$ $room1$ \newline
	$location(diamond1)$ $room1$ \newline
	$doorLocked(room1)$ $false$ \newline
	$location(switch1)$ $room2$
	\end{quote}
	
	{\bf Goal State}:
	\begin{quote}
	$location(diamond1)$ $room2$ 
	\end{quote}

   	\end{minipage}
}
\makebox{} \par

\makebox{} \par
\framebox
{
	\begin{minipage}{7.6cm}
	{\bf Operators}: \newline
    	$WalkThrough(agent, door, fromRoom, toRoom)$: 
	\begin{quote}
	$prv$: $doorLocked(door)$ $false$ \newline
	$pre$: $location(agent)$ $fromRoom$ \newline
	$post$: $location(agent)$ $toRoom$
	\end{quote}
	
	$Steal(agent, diamond, room, door)$:
	\begin{quote}
	$prv$: $location(agent)$ $room$ \newline
	$pre$: $doorLocked(door)$ $u$\newline
	$pre$: $location(diamond)$ $room$ \newline
	$post$: $doorLocked(door)$ $true$ \newline
	$post$: $location(diamond)$ $agent$
	\end{quote}
	
	$Switch(agent, switch, room, door)$:
	\begin{quote}
	$prv$: $location(switch)$ $room$ \newline
	$prv$: $location(agent)$ $room$ \newline
	$pre$: $doorLocked(door)$ $u$ \newline
	$post$: $doorLocked(door)$ $false$
	\end{quote}
	
	$Place(agent, diamond, room)$:
	\begin{quote}
	$prv$: $location(agent)$ $room$ \newline
	$pre$: $location(diamond)$ $agent$ \newline
	$post$: $location(diamond)$ $room$ 
	\end{quote}	

   	\end{minipage}
}
\makebox{} \par

Again, we construct the ICGS for this type-$2$ RC example, as shown in Figure \ref{fig:single-agent}. One key observation is that a single agent cannot address this problem due to the fact that $WalkThrough$ with the diamond to $room2$ requires $doorLocked(door1) = false$, which is violated by the $Steal$ action to obtain the diamond in the first place. This is clearly related to the loops in Figure \ref{fig:single-agent}. In particular, we define the notion of {\em causal loops}.

\begin{definition}[Causal Loop (CL)]
A causal loop in the ICGS is a directed loop that contains at least one directed edge. 
\label{def:cloop}
\end{definition}

Note that undirected edges can be considered as edges in either direction but at least one directed edge must be present in a causal loop.

%--------------------------------------------------------------------------------------------------
\subsection{Gap between MAP and Single Agent Planning}

\begin{figure}
\centering
{
    \includegraphics{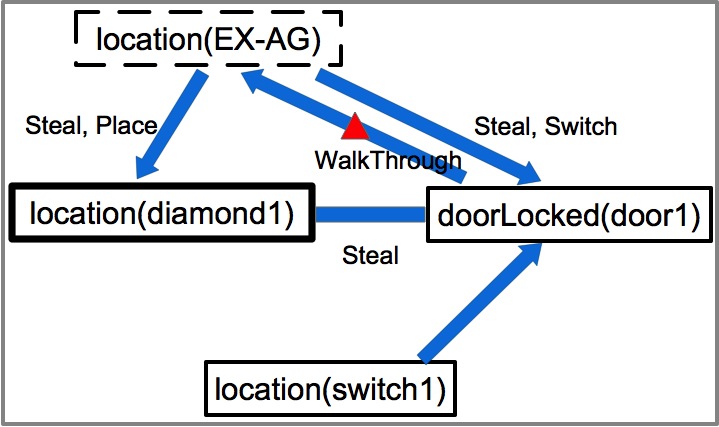}
}
\caption{ICGS for the diamond example that illustrates the second condition that causes RC in type-$2$ RC problems. Actions (without parameters) are labeled along with their corresponding edges. The variables in $G$ are shown as bold-box nodes and agent VSs are shown as dashed-box nodes.}
\label{fig:single-agent}
\end{figure}

We now establish in the following theorem that when none of the previously discussed conditions (for both type-$1$ and type-$2$ RC) hold in a MAP problem, this problem can be solved by a single agent.

\begin{theorem}
Given a solvable MAP problem that satisfies N-DVC for all agents, and for which the ICGS is traversable and contains no causal loops, any single agent can also achieve the goal.  
\label{thm:single}
\end{theorem}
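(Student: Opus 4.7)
The plan is to convert an arbitrary multi-agent plan $\pi_{MAP}$ (which exists since $P$ is solvable) into a single-agent plan $\pi^*$ executed by one chosen agent $\phi^*$. Because N-DVC holds, $\phi^*$ has the same action signatures, variable signatures, and variable domains as every other agent in $\Phi$, so any ground action appearing in $\pi_{MAP}$ can be re-instantiated with $\phi^*$ as its executing agent; this replacement preserves preconditions and postconditions on non-agent variables but shifts those on agent variables from the original executing agent onto $\phi^*$.

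I would then set up an induction over the prefix of $\pi_{MAP}$ being simulated, maintaining the invariant that after processing the first $i$ actions, (a) the non-agent variables are in the same state as in the multi-agent trajectory at step $i$, and (b) $\phi^*$'s agent variables are in some state reachable by $\phi^*$ alone. To extend from $i$ to $i+1$, before re-instantiating the $(i+1)$-th action for $\phi^*$, I would insert a block of ``setup'' actions that bring $\phi^*$'s agent variables to the state required by that action's preconditions on agent variables. Because the ICGS is traversable, the inner closure $C$ containing those agent variables has a TSS, so such a setup block exists, provided the OC of $C$ can be set freely.

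The second, more delicate ingredient is handling the possibility that the OC of $C$ contains non-agent variables currently holding values that the remainder of the plan depends on. This is where the absence of causal loops enters. I would argue by induction on the position of $C$ in a topological ordering of the DAG induced on inner closures: to free up the OC of $C$, it suffices to modify variables in ICs strictly upstream of $C$, which are in turn freed by going further upstream. Because the IC-level graph is acyclic, this recursion terminates, and by construction none of the resulting setup operations touches any IC downstream of $C$. Since $G$ only involves non-agent variables (by the standing assumption $G_\phi=\emptyset$), the final state of $\phi^*$'s agent variables is irrelevant, so establishing the invariant at $i=L$ (the length of $\pi_{MAP}$) yields the required plan for $\phi^*$.

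The main obstacle I anticipate is making the inductive invariant tight enough that the setup actions inserted at step $i+1$ provably do not disturb the non-agent state established at step $i$. Concretely, the technical heart of the argument is a lemma roughly of the form: under the three stated hypotheses, for any inner closure $C$ and any target assignment to its variables, $\phi^*$ alone admits a plan that reaches the target while preserving the values of all variables strictly downstream of $C$. Combining TSS, N-DVC, and absence of causal loops cleanly in such a lemma --- and in particular being careful about undirected edges \emph{inside} an IC, since the ``downstream'' relation lives at the IC level rather than at the individual-variable level --- is the subtle part.
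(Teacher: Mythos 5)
Your proposal is sound in its essentials and leans on the same two pillars as the paper --- acyclicity of the ICGS yielding a topological (level) structure on inner closures, and TSS giving reachability within each IC conditioned on freedom upstream --- but the outer architecture is genuinely different. The paper never touches the multi-agent plan $\pi_{MAP}$: it inducts on the levels of the causal graph to prove the much stronger statement that a single agent can realize \emph{any} prescribed trajectory of states for all variables, after which reaching the goal is a special case. Your argument instead simulates $\pi_{MAP}$ action by action, re-instantiating each action for $\phi^*$ (licensed by N-DVC) and inserting setup blocks, with a nested induction over the IC-level DAG to justify each block. What the paper's route buys is brevity and a clean intermediate result; what yours buys is, in principle, a weaker demand on traversability (you only ever need to reach the states actually required along $\pi_{MAP}$), though under the stated hypotheses this saving is illusory. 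One piece of bookkeeping you should make explicit: your key lemma preserves variables \emph{downstream} of the IC being traversed, but your invariant (a) requires \emph{all} non-agent variables --- including those upstream of the agent-VS closure, which your setup blocks may clobber --- to match the multi-agent trajectory at step $i$. You therefore need either a restoration pass after each setup block (reapplying the same lemma to drive each disturbed upstream IC back to its step-$i$ value, which TSS permits and acyclicity terminates) or a weakened invariant that only pins down the non-agent variables actually read by the remaining suffix of $\pi_{MAP}$. With that addition the simulation closes; without it the induction step does not literally go through as stated.
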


\begin{proof}
Given no causal loops, the directed edges in the ICGS divides the variables into levels, in which: $1)$ variables at each level do not appear in other levels; $2)$ higher level variables are connected to lower level variables with only directed edges going from higher levels to lower levels; $3)$ variables within each level are either not connected or connected with undirected edges. For example, the variables in Figure \ref{fig:no-loop} are divided into the following levels (from high to low): $\{v_1\}$, $\{v_2, v_3\}$, $\{v_4\}$, $\{v_5, v_7\}$, $\{v_6, v_8\}$. Note that this division is not unique.

Next, we prove the result by induction based on the level. Suppose that the ICGS has $k$ levels and we have the following holds: given any trajectory of states for all variables, there exists a plan whose execution traces of states include this trajectory in the correct order.

When the ICGS has $k + 1$ levels: given any state $s$ for all variables from level $1$ to $k + 1$, we know from the assumption that the ICGS is traversable that there exists a plan that can update the variables at the $k + 1$ level from their current states to the corresponding states in $s$. This plan (denoted by $\pi$), meanwhile, requires the freedom to change the states of variables from level $1$ to $k$. Given the induction assumption, we know that we can update these variables to their required states in the correct order to satisfy $\pi$; furthermore, these updates (at level $k$ and above) also do not influence the variables at the $k + 1$ level (hence do not influence $\pi$). Once the states of the variables at the $k + 1$ level are updated to match those in $s$, we can then update variables at level $1$ to $k$ to match their states in $s$ accordingly. Using this process, we can incrementally build a plan whose execution traces of states contain any given trajectory of states for all the variables in the correct order.

Furthermore, the induction holds when there is only one level given that ICGS is traversable. Hence, the induction conclusion holds. The main conclusion directly follows. 
\end{proof}

%--------------------------------------------------------------------------------------------------
\subsection{Towards an Upper Bound for Type-$2$ RC}

In this section, we investigate type-$2$ RC problem to obtain upper bounds on the $k$ (Definition \ref{def:m-k-agent}), based on different relaxations of the two conditions that cause RC in type-$2$ RC problems. We first relax the assumption regarding causal loops (CLs) and show that the relaxation process is associated with how certain CLs can be broken.

We notice that there are two kinds of CLs in ICGS. The first kind contains agent VSs while the second kind does not. Although we cannot break CLs for the second kind, it is possible to break CLs for the first kind. The motivation is that certain edges in these CLs can be removed when there is no need to update the associated agent VSs. In our diamond example, when there are two agents in $room1$ and $room2$, respectively, and they can stay where they are during the execution of the plan, there is no need to $WalkThrough$ and hence the associated edges can be removed to break the CLs. Figure \ref{fig:two-agent} shows this process. Based on this observation, we introduce the following lemma.

\begin{figure}
\centering
{
    \includegraphics{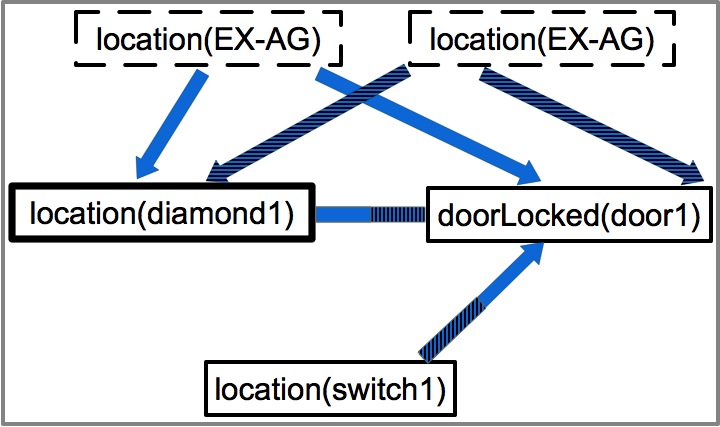}
}
\caption{Illustration of the process for breaking causal loops in the diamond example, in which the CLs are broken by removing the edge marked with a triangle in Figure \ref{fig:single-agent}. Two agent VSs are introduced to replace the original agent VS.}
\label{fig:two-agent}
\end{figure}

\begin{lemma}
Given a solvable MAP problem that satisfies N-DVC for all agents and for which the ICGS is traversable, if no CLs contain agent VSs and all the edges going in and out of agent VSs are directed, the minimum number of agents required is upper bounded by $\times_{v \in CR(\Phi)} |D(v)|$, when assuming that the agents can choose their initial states, in which $CR(\Phi)$ is constructed as follows:
\begin{enumerate}
	\item add the set of agent VSs that are in the CLs into $CR(\Phi)$;
	\item add in an agent VS into $CR(\Phi)$ if there exists a directed edge that goes into it from any variable in $CR(\Phi)$; 
	\item iterate $2$ until no agent VSs can be added.
\end{enumerate}
\label{lem:loops}
\end{lemma}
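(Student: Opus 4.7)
The plan is to construct a joint plan using $N = \times_{v \in CR(\Phi)} |D(v)|$ pre-positioned agents and then reduce to Theorem~\ref{thm:single}. I would first exploit the freedom to choose initial states: allocate one agent for every joint value assignment of the variables in $CR(\Phi)$, so that each such configuration is realised by exactly one agent, and give every agent a common initial value (consistent with $I$) for the agent VSs outside $CR(\Phi)$. The intent is that no agent ever needs to update any of its $CR(\Phi)$ variables, so the pre-positioning is preserved throughout the joint execution.

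Next I would pass to the modified ICGS $G'$ obtained by deleting every directed edge pointing into a variable of $CR(\Phi)$. By hypothesis, edges incident to agent VSs are all directed, so these are exactly the edges arising from actions that would update some $v \in CR(\Phi)$. Step~1 of the construction puts every agent VS appearing in a CL into $CR(\Phi)$, so each causal loop in the ICGS is broken in $G'$, making $G'$ acyclic. Step~2 closes $CR(\Phi)$ under outgoing directed edges to agent VSs; this ensures that whenever an action updates an agent VS $v' \notin CR(\Phi)$, none of its prevail conditions fall on a $CR(\Phi)$ variable, so the team can execute such an action without disturbing any agent's pre-positioned state.

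I would then run the levelling induction from the proof of Theorem~\ref{thm:single} on $G'$. At each level, whenever that argument calls for an action whose $CR(\Phi)$-prevails take a particular combination of values, I would dispatch exactly the pre-positioned agent carrying that combination; the remaining preconditions on non-$CR(\Phi)$ variables are furnished by the induction hypothesis applied to lower levels, using traversability of their ICs in $G'$. Concatenating these executions yields a valid MAP plan with $N$ agents.

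The main obstacle is verifying that the inductive step still goes through after the incoming edges at $CR(\Phi)$-nodes are removed. The OC of an IC in $G'$ may be smaller, and the flexibility that the single-agent proof derived from freely modifying a $CR(\Phi)$ variable must now be supplied by choosing among the pre-positioned agents instead. Showing that the pool of $N$ agents in fact realises every combination of $CR(\Phi)$ prevail conditions that the plan ever requires, so that no gap is introduced in the induction, is the delicate point of the argument and is precisely what the Cartesian product count is designed to deliver.
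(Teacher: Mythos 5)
Your proposal takes essentially the same route as the paper's proof: pre-position one agent for each joint assignment of the $CR(\Phi)$ variables, freeze those variables (the paper does this by replacing each $v\in CR(\Phi)$ with $|D(v)|$ singleton-domain copies and dropping their incoming edges, which is equivalent to your deletion of edges into $CR(\Phi)$), invoke Theorem~\ref{thm:single} on the resulting loop-free traversable problem, and dispatch the agent whose frozen values match each action's $CR(\Phi)$ prevail conditions while having all agents execute in lockstep any action on an agent VS outside $CR(\Phi)$. The ``delicate point'' you flag at the end is in fact immediate from the construction---every combination of $CR(\Phi)$ prevails is realised by some agent because the pool enumerates the full Cartesian product---so your argument is complete in the same sense, and to the same level of rigor, as the paper's.
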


\begin{proof}
Based on the previous discussions, we can remove edges that are connected to agent VSs to break loops. For each variable in $CR(\Phi)$, denoted by $v$, we introduce a set of variables $N = \{v_1, v_2, ..., v_{|D(v)|}\}$ to replace $v$. Any edges connecting to $v$ from other variables are duplicated on all variables in $N$, except for the edges that go into $v$. Each variable $v_i \in N$ has a domain with a single value; this value for each variable in $N$ is different and chosen from $D(v)$. Note that these new variables do not affect the traversability of the ICGS.

From Theorem \ref{thm:single}, we know that a virtual agent $\phi^+$ that can simultaneously assume all the states that are the different permutations of states for $CR(\Phi)$ can achieve the goal. We can simulate $\phi^+$ using $\times_{v \in CR(\Phi)} |D(v)|$ agents as follows. We choose the agent initial states according to the permutations of states for $CR(\Phi)$, while choosing the same states for all the other agent VSs according to $\phi^+$. Given a plan for $\phi^+$, we start from the first action. Given that all permutations of states for $CR(\Phi)$ are assumed by an agent, we can find an agent, denoted by $\phi$, that can execute this action: $1)$ If this action updates an agent VS in $CR(\Phi)$, we do not need to execute this action based on the following reasoning. Given that all edges going in and out of agent VSs are directed, we know that this action does not update $V_o$. (Otherwise, there must be an undirected edge connecting a variable in $V_o$ to this agent VS. Similarly, we also know that this action does not update more than one agent VS.). As a result, it does not influence the execution of the next action. $2)$ If this action updates an agent VS that is not in $CR(\Phi)$, we know that this action cannot have variables in $CR(\Phi)$ as preconditions or prevail conditions, since otherwise this agent VS would be included in $CR(\Phi)$ given its construction process. Hence, all the agents can execute the action to update this agent VS, given that all the agent VSs outside of $CR(\Phi)$ are always kept synchronized in the entire process (in order to simulate $\phi^+$). $3)$ Otherwise, this action must be updating only $V_o$ and we can execute the action on $\phi$.

Following the above process for all the actions in $\phi^+$'s plan to achieve the goal. Hence, the conclusion holds.
\end{proof}

Next, we investigate the relaxation of the traversability of the ICGS.

\begin{lemma}
Given a solvable MAP problem that satisfies N-DVC for all agents, if all the edges going in and out of agent VSs are directed, the minimum number of agents required is upper bounded by $\times_{v \in VS(\Phi)} |D(v)|$, when assuming that the agents can choose their initial states.
\label{lem:trav}
\end{lemma}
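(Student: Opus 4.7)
The plan is to adapt the super-agent construction from Lemma~\ref{lem:loops}, but without invoking Theorem~\ref{thm:single}, since traversability of the ICGS is no longer assumed. Instead, I will use the given solvability of the problem directly to supply a template plan which is then rewritten for a virtual agent and, finally, for a real team of agents.

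The first step is to extract the structural consequence of the assumption that all edges incident to agent VSs are directed. An undirected edge $v - v'$ in the ICGS arises precisely when some action updates both $v$ and $v'$; hence the assumption forces every action to fall into exactly one of two disjoint types: \emph{type (a)}, whose only postcondition is on a single agent VS, or \emph{type (b)}, which updates only non-agent variables (though it may have preconditions or prevail conditions on agent VSs). In particular, no type (b) action modifies any agent VS.

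The second step is to introduce a virtual agent $\phi^+$ that simultaneously maintains one ``copy'' at every permutation $\vec{v} \in \times_{v \in VS(\Phi)} D(v)$ of agent VS values, and to argue that $\phi^+$ can achieve the goal. Starting from any plan $\pi$ witnessing solvability with $\Phi$, I would construct a plan for $\phi^+$ by deleting every type (a) action and, for each remaining type (b) action that $\pi$ schedules on an agent $\phi_i$ whose agent VS values at that step are $\vec{v}$, routing it to copy $\vec{v}$ of $\phi^+$. Because $\phi^+$ begins with every permutation present and type (b) actions never disturb agent VSs, every permutation remains available at every step, so every agent-VS precondition or prevail condition is met by some copy; and because the surviving type (b) actions are executed in the same order as in $\pi$ and type (a) actions touch no non-agent variable, the non-agent trajectory (hence the goal, which by assumption involves only non-agent variables) is reproduced exactly.

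The third step is to simulate $\phi^+$ by $\times_{v \in VS(\Phi)} |D(v)|$ real agents, each initialized to a distinct permutation $\vec{v}$, routing each action of $\phi^+$'s plan to the corresponding real agent. Since only type (b) actions appear and they do not change agent VSs, each real agent retains its initial permutation throughout execution, so all preconditions remain satisfied. I expect the main obstacle to be rigorously verifying the transformation in the second step---specifically, confirming that deleting every type (a) action never invalidates a later type (b) action and that no hidden ordering dependency on agent-VS updates is overlooked. The directedness assumption is essential here, since it rules out any action whose effects straddle an agent VS and another variable, which is precisely the situation in which a type (a) action's effect could not be freely discarded.
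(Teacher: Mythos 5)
Your proposal is correct and follows essentially the same route as the paper's proof: instantiate one agent per permutation of agent VS values, delete the actions that update agent VSs (which, by the directedness assumption, touch nothing else), and route each remaining action of the witnessing plan to an agent whose frozen VS values satisfy its prevail conditions. Your explicit type (a)/type (b) action classification and the intermediate virtual agent $\phi^+$ just make more careful and explicit what the paper states tersely (the paper reserves the $\phi^+$ device for Lemma~\ref{lem:loops} but the argument is the same).
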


\begin{proof}
Given a valid plan $\pi_{MAP}$ for the problem, we can solve the problem using $\times_{v \in VS(\Phi)} |D(v)|$ agents as follows: first, we choose the agent initial states according to the permutations of state for $VS(\Phi)$.

The process is similar to that in Lemma \ref{lem:loops}. We start from the first action. Given that all permutations of $VS(\Phi)$ are assumed by an agent, we can find an agent, denoted by $\phi$, that can execute this action: if this action updates some agent VSs in $VS(\Phi)$, we do not need to execute this action; otherwise, the action must be updating only $V_o$ and we can execute the action on $\phi$.

Following the above process for all the actions in $\pi_{MAP}$ to achieve the goal. Hence, the conclusion holds. 
\end{proof}

Note that the bounds in Lemma \ref{lem:loops} and \ref{lem:trav} are upper bounds and the minimum number of agents actually required may be smaller. Nevertheless, for the simple scenario in our diamond example, the assumptions of both lemmas are satisfied and the bounds returned are $2$ for both, which happens to be exactly the $k$ in Definition \ref{def:m-k-agent}. In future work, we plan to investigate other relaxations and establish the tightness of these bounds.

\section{Conclusion}
\label{sec:con}

In this paper, we introduce the notion of required cooperation (RC), which answers two questions: $1)$ whether more than one agent is required for a solvable MAP problem, and $2)$ what is the minimum number of agents required for the problem. We show that the exact answers to these questions are difficult to provide. To facilitate our analysis, we first divide RC problems into two class -- type-$1$ RC involves heterogeneous agents and type-$2$ RC involves homogeneous agents. For the first question, we show that most of the problems in the common planning domains belong to type-$1$ RC; the set of type-$1$ RC problems in which RC is only caused by DVC can be solved with a super agent. For type-$2$ RC problems, we show that RC is caused when the state space is not traversable or when there are causal loops in the causal graph; we provide upper bounds for the answer of the second question, based on different relaxations of the conditions that cause RC in type-$2$ RC problems. These relaxations are associated with, for example, how certain causal loops can be broken in the causal graph.

\section*{Acknowledgement}
This research is supported in part by the ARO grant W911NF-13-1-0023,
and the ONR grants N00014-13-1-0176 and  N00014-13-1-0519.

\bibliography{paper.bib} 

\begin{thebibliography}{}

\bibitem[\protect\citeauthoryear{Amir and Engelhardt}{2003}]{amir03}
Amir, E., and Engelhardt, B.
\newblock 2003.
\newblock Factored planning.
\newblock In {\em Proceedings of the 18th International Joint Conferences on
  Artificial Intelligence},  929--935.

\bibitem[\protect\citeauthoryear{Bacchus and Yang}{1993}]{bacchus93}
Bacchus, F., and Yang, Q.
\newblock 1993.
\newblock Downward refinement and the efficiency of hierarchical problem
  solving.
\newblock {\em Artificial Intelligence} 71:43--100.

\bibitem[\protect\citeauthoryear{Backstrom and Nebel}{1996}]{backstrom96}
Backstrom, C., and Nebel, B.
\newblock 1996.
\newblock Complexity results for sas+ planning.
\newblock {\em Computational Intelligence} 11:625--655.

\bibitem[\protect\citeauthoryear{Brafman and Domshlak}{2008}]{brafman2008}
Brafman, R.~I., and Domshlak, C.
\newblock 2008.
\newblock {From One to Many: Planning for Loosely Coupled Multi-Agent Systems}.
\newblock In {\em Proceedings of the 18th International Conference on Automated
  Planning and Scheduling},  28--35.
\newblock AAAI Press.

\bibitem[\protect\citeauthoryear{Brafman and Domshlak}{2013}]{brafman2013}
Brafman, R.~I., and Domshlak, C.
\newblock 2013.
\newblock On the complexity of planning for agent teams and its implications
  for single agent planning.
\newblock {\em Artificial Intelligence} 198(0):52 -- 71.

\bibitem[\protect\citeauthoryear{Brafman}{2006}]{brafman06}
Brafman, R.~I.
\newblock 2006.
\newblock Factored planning: How, when, and when not.
\newblock In {\em Proceedings of the 21st National Conference on Artificial
  Intelligence},  809--814.

\bibitem[\protect\citeauthoryear{Bylander}{1991}]{Bylander1991}
Bylander, T.
\newblock 1991.
\newblock Complexity results for planning.
\newblock In {\em Proceedings of the 12th International Joint Conference on
  Artificial Intelligence}, volume~1,  274--279.

\bibitem[\protect\citeauthoryear{Decker and Lesser}{1992}]{Decker92}
Decker, K.~S., and Lesser, V.~R.
\newblock 1992.
\newblock Generalizing the partial global planning algorithm.
\newblock {\em International Journal of Cooperative Information Systems}
  1:319--346.

\bibitem[\protect\citeauthoryear{Durfee and Lesser}{1991}]{Durfee91}
Durfee, E., and Lesser, V.~R.
\newblock 1991.
\newblock Partial global planning: A coordination framework for distributed
  hypothesis formation.
\newblock {\em IEEE Transactions on Systems, Man, and Cybernetics}
  21:1167--1183.

\bibitem[\protect\citeauthoryear{Helmert}{2006}]{helmert06}
Helmert, M.
\newblock 2006.
\newblock The fast downward planning system.
\newblock {\em Journal of Artificial Intelligence Research} 26:191--246.

\bibitem[\protect\citeauthoryear{Jonsson and Rovatsos}{2011}]{jonsson2011}
Jonsson, A., and Rovatsos, M.
\newblock 2011.
\newblock {S}caling {U}p {M}ultiagent {P}lanning: {A} {B}est-{R}esponse
  {A}pproach.
\newblock In {\em Proceedings of the 21th International Conference on Automated
  Planning and Scheduling},  114--121.
\newblock AAAI Press.

\bibitem[\protect\citeauthoryear{Knoblock}{1994}]{Knoblock94}
Knoblock, C.
\newblock 1994.
\newblock Automatically generating abstractions for planning.
\newblock {\em Artificial Intelligence} 68:243--302.

\bibitem[\protect\citeauthoryear{Kvarnstrom}{2011}]{kvarnstrom11}
Kvarnstrom, J.
\newblock 2011.
\newblock Planning for loosely coupled agents using partial order
  forward-chaining.
\newblock In {\em Proceedings of the 21th International Conference on Automated
  Planning and Scheduling}.

\bibitem[\protect\citeauthoryear{Nissim and Brafman}{2012}]{Nissim2012}
Nissim, R., and Brafman, R.~I.
\newblock 2012.
\newblock Multi-agent a* for parallel and distributed systems.
\newblock In {\em Proceedings of the 11th International Conference on
  Autonomous Agents and Multiagent Systems}, volume~3,  1265--1266.

\bibitem[\protect\citeauthoryear{Nissim, Brafman, and
  Domshlak}{2010}]{Nissim2010}
Nissim, R.; Brafman, R.~I.; and Domshlak, C.
\newblock 2010.
\newblock A general, fully distributed multi-agent planning algorithm.
\newblock In {\em Proceedings of the 11th International Conference on
  Autonomous Agents and Multiagent Systems},  1323--1330.

\bibitem[\protect\citeauthoryear{Torreno, Onaindia, and
  Sapena}{2012}]{torreno2012}
Torreno, A.; Onaindia, E.; and Sapena, O.
\newblock 2012.
\newblock An approach to multi-agent planning with incomplete information.
\newblock In {\em European Conference on Artificial Intelligence}, volume 242,
  762--767.

\end{thebibliography}
\bibliographystyle{aaai}

\end{document}